\newtheorem{theorem}{Theorem}
\DeclareMathOperator*{\argmin}{argmin}
\DeclareMathOperator*{\MSE}{MSE}
\DeclareMathOperator*{\LL}{LL}
\DeclareMathOperator*{\BF}{BF}
\DeclareMathOperator*{\EE}{\mathbb{E}}
\begin{document}

\copyrightyear{2022}
\copyrightclause{Copyright for this paper by its authors. Use permitted under Creative Commons License Attribution 4.0 International (CC BY 4.0).}

\conference{SafeAI 2023, The AAAI Workshop on Artificial Intelligence Safety, Feb 13--14, 2023, Washington, D.C.}

\title{Active Reward Learning from Multiple Teachers}

\author[1]{Peter Barnett}[email=peterbarnettnz@gmail.com]
\cormark[1]
\author[1]{Rachel Freedman}[]
\author[1]{Justin Svegliato}[]
\author[1]{Stuart Russell}[]

\address[1]{Center for Human-Compatible AI, University of California, Berkeley, CA 94720, USA}

\begin{abstract}
    Reward learning algorithms utilize human feedback to infer a reward function, which is then used to train an AI system. This human feedback is often a preference comparison, in which the human teacher compares several samples of AI behavior and chooses which they believe best accomplishes the objective. While reward learning typically assumes that all feedback comes from a single teacher, in practice these systems often query multiple teachers to gather sufficient training data. In this paper, we investigate this disparity, and find that algorithmic evaluation of these different sources of feedback facilitates more accurate and efficient reward learning. We formally analyze the value of information (VOI) when reward learning from teachers with varying levels of rationality, and define and evaluate an algorithm that utilizes this VOI to actively select teachers to query for feedback. Surprisingly, we find that it is often more informative to query comparatively irrational teachers. By formalizing this problem and deriving an analytical solution, we hope to facilitate improvement in reward learning approaches to aligning AI behavior with human values.
\end{abstract}

\begin{keywords}
    Reward Learning \sep Active Learning \sep Preference Learning \sep Value of Information
\end{keywords}

\maketitle

\section{Introduction}

Standard AI and machine learning algorithms require the designer to specify a cost or reward function. This objective incentivizes desired behavior and penalizes mistakes, teaching the system how to perform the task. While such objectives are easy to manually specify for problems with clear win conditions, such as games~\cite{silver2016mastering, silver2017mastering,berner2019dota} and tasks with clear goals, such as image classification~\cite{krizhevsky2017imagenet,wang2017residual}, they can be challenging to formalize for more nuanced tasks~\cite{Krakovna2018}. For example, \citet{lee_pebble_2021} find that humans struggle to define an objective that incentivizes bipedal locomotion, despite being experts in both machine learning and walking. By incentivizing incorrect behavior, misspecified objectives can lead to useless or even dangerous outcomes~\cite{Leike2018}. Ensuring that AI systems optimize objectives that align with our own is a crucial part of building safe and beneficial AI.

\textit{Reward learning} techniques enable AI systems to learn their objectives by observing and interacting with humans instead of requiring their designers to specify these objectives manually~\cite{jeon_reward-rational_2020}. Humans can train reward learning systems using a variety of feedback modalities, including demonstrations~\cite{ng2000algorithms,abbeel2004apprenticeship,ziebart2010modeling}, pairwise comparisons~\cite{lee_pebble_2021,sadigh_active_2017,Christiano2017}, natural language~\cite{goyal2019using}, numeric values~\cite{arumugam2019deep}, corrections~\cite{bajcsy2017learning}, and proxy rewards~\cite{hadfield2017inverse,mindermann2018active}. Reward learning from pairwise comparisons in particular has proven remarkably effective across a variety of tasks, including complex physical maneuvers for continuous control systems~\cite{lee_pebble_2021,Christiano2017} and text summarization for language language models~\cite{stiennon2020learning,ziegler2019fine}. In the future, it may even be possible to use reward learning to train AI systems to assist humans in researching safe AI~\cite{Leike2018, leike_schulman_wu_2022}.

However, to infer reward functions from human feedback, reward learning systems must model human decision-making, and incorrect human decision-making models often leads to poor inference~\cite{Skalse2022-np,milli2020literal,freedman2021choice}. Moreover, reward learning systems typically assume that all feedback comes from a single distribution or teacher, despite querying multiple teachers to generate sufficient feedback. However, humans often vary in their expertise, focus, and intelligence, affecting the noisiness of their feedback. The practice of conflating all feedback implicitly disregards the differences between different teachers, increasing the likelihood of human model misspecification and the limitations of reward learning~\cite{daniels2022expertise}. 

In this work, we extend reward learning to take advantage of differences between teachers. We develop a Bayesian reward learning algorithm that actively selects which teacher to query based on the noisiness of their feedback and the learner's current belief. We find that querying a \emph{less} rational teacher can often be more informative than querying a \emph{more} rational teacher, since teacher mistakes inform the agent of the relative values of alternatives. For example, imagine that two teachers are comparing two alternatives, $A$ and $B$. $A$ is worth more than $B$, but only slightly. If the first teacher is perfectly rational, they will always select $A$ over $B$. The learner can infer from this that $A$ is preferable to $B$, but has no way to learn how significant the distinction is. However, assume that the second teacher is somewhat less rational, and occasionally mixes up alternatives of similar value. Then they will typically choose $A$, but sometimes choose $B$, and this allows the learner to infer that the gap between $A$ and $B$ is small. Section 3 formalizes this rationality model and inference procedure.

The rest of the paper is as follows. In Section 2, we discuss prior work on reward learning, active learning, and human modeling. In Section 3, we describe the mechanics of reward learning, including the model of human rationality and the metrics that will be used to measure the value of information (VOI) of teacher feedback. In Section 4, we propose a teacher selection algorithm that selects which teacher to query for feedback at each time step based on the modeled rationality of each teacher and the learner's belief distribution over the reward function. In Sections 5 and 6, we present theoretical and empirical results, showing that the learner's belief will eventually converge to the true reward function under the teacher selection algorithm, that querying less rational teachers can often be more informative, and that our teacher selection method outperforms simple heuristics like always querying the most rational teacher. By formalizing the problem of learning from multiple teachers and deriving an analytical solution, we hope to facilitate improvement in reward learning approaches to value alignment.

\section{Related Work}

\paragraph{Reward Learning}

Reward learning techniques allow AI systems to learn reward functions by observing or interacting with humans. For example, \textit{inverse reinforcement learning} agents observe human behavior or policies, and then infer an underlying reward function that the behavior optimizes~\cite{ng2000algorithms,abbeel2004apprenticeship,ziebart2010modeling}. Recent advances in reward learning have focused on learning from preference comparisons. Here, human teachers observe paired samples of system behavior, then choose which sample they prefer out of each pair. The system learns a reward model that maximizes the likelihood of these preferences, then uses that model to generate a reward signal to guide its behavior. This technique has been successfully applied to many domains, from continuous control~\cite{lee_pebble_2021,Christiano2017} to language generation tasks~\cite{stiennon2020learning,ziegler2019fine}. Reward learning can also use a variety of other feedback modalities, including preference comparisons~\cite{lee_pebble_2021,sadigh_active_2017,Christiano2017}, natural language~\cite{goyal2019using}, numeric values~\cite{arumugam2019deep}, corrections~\cite{bajcsy2017learning}, and proxy rewards~\cite{hadfield2017inverse,mindermann2018active}, but we focus on preference comparisons in this paper due to its recent success.

\paragraph{Active Reward Learning}

Human feedback is expensive and time-consuming to generate, so reward learning algorithms must learn efficiently from limited data. They do this in part by actively selecting the queries that are sent to human teachers in order to maximize the expected VOI of human feedback. ~\citet{sadigh_active_2017} assume that the system is a Bayesian learner, actively synthesizing queries that maximize the expected volume removed from the learner's posterior. ~\citet{biyik_batch_2018} develop efficient approximations to this method and show how to integrate active query selection and reward learning in practice. ~\citet{lee_pebble_2021} take a different approach, empirically evaluating various heuristic strategies for query selection and finding that uncertainty-based sampling methods tend to perform the best. However, all of this previous work focuses on choosing which \textit{queries} to send to the teachers. In this paper, we instead consider which \textit{teachers} to send these queries to.

\paragraph{Human Modeling}

To infer reward functions, AI systems must model the behavior of humans. Early work on reward learning assumed that human behavior was perfectly rational and that human teachers always chose the alternative that maximized their reward~\cite{ng2000algorithms}. Later work models human behavior as pedagogic~\cite{milli2020literal}, systematically biased~\cite{evans_learning_2016}, and noisily or Boltzmann-rational~\cite{jeon_reward-rational_2020,ziebart2010modeling}. We will follow recent work on learning from human preferences~\cite{lee_pebble_2021,jeon_reward-rational_2020,ziebart2010modeling,Christiano2017} and model human teachers as Boltzmann-rational, making choices according to a well-known probability model specified later in the paper.
\begin{figure}[t]
    \centering
    \includegraphics[width=\columnwidth]{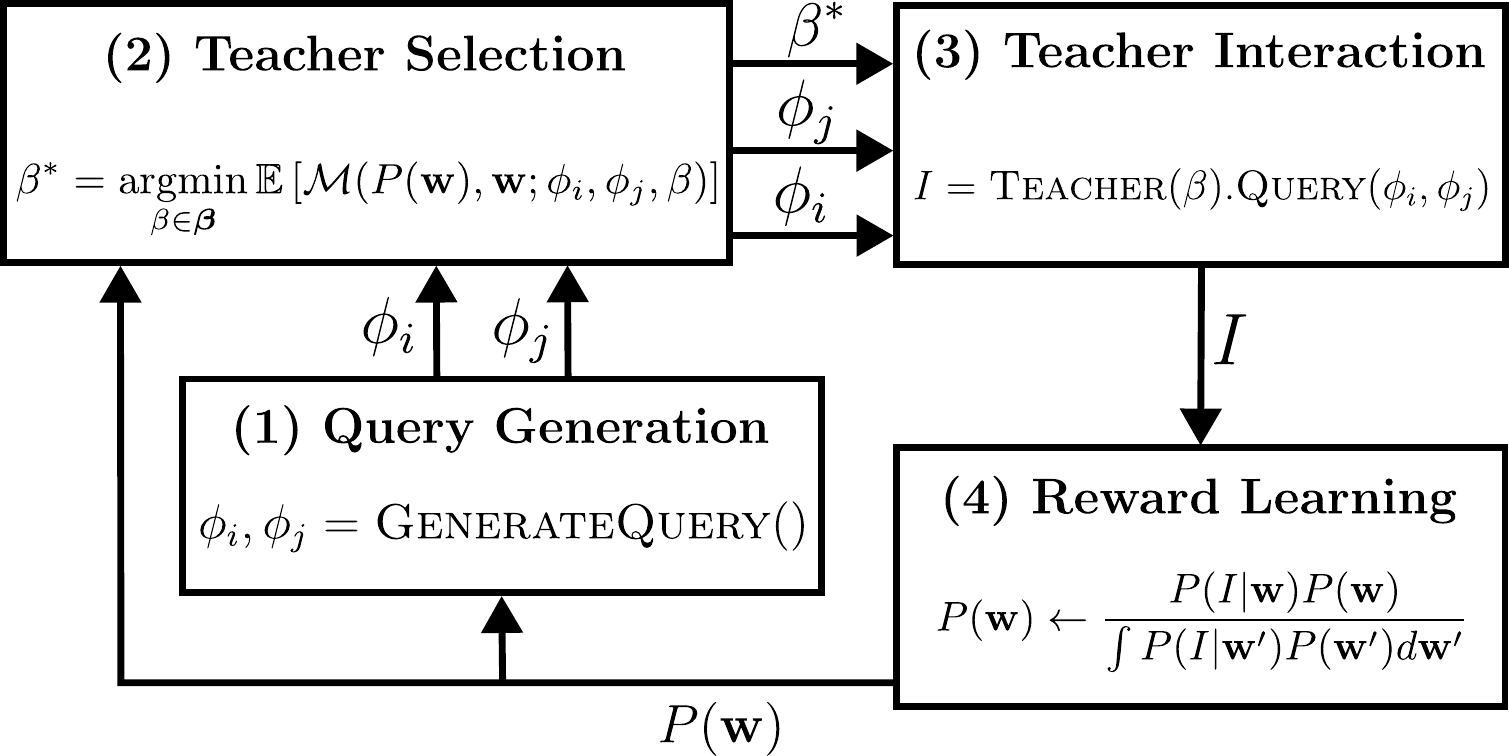}
    \caption{Our active reward learning approach.}
    \label{fig:flow-diagram}
\end{figure}

\section{Active Reward Learning}

In this section, we formalize the problem of selecting the most informative teacher to query in order to gradually learn the correct reward model. In particular, we are interested in greedily selecting the teacher to query at each time step such that the reward model of the agent efficiently converges to the correct reward model. 

At a high level, the teacher selection problem begins with a set of items or trajectories to compare, along with a set of human teachers to evaluate those comparisons. The human teachers each have a different level of rationality that is known \emph{a priori}, meaning that the probability of a given human teacher making a mistake by preferring a less valuable item over a more valuable item is known in advance. During each time step of our approach depicted in Figure~\ref{fig:flow-diagram}, two items are sampled from the set of items (\emph{Step 1}) and then a human teacher is selected to be queried based on these items and the current belief about the reward model (\emph{Step 2}). The human teacher is asked which of the two items they prefer (\emph{Step 3}), and their preference is used to update the reward model (\emph{Step 4}). This process of selecting a query and a teacher is repeated until the reward model converges to the correct reward model.

\emph{Query selection} is the problem of choosing which items to present to the teacher~\cite{lee_pebble_2021}. Some approaches to query selection include choosing the pair of items for which the preference predictors are most uncertain~\cite{lee_pebble_2021,Christiano2017}. Other approaches to query selection include selecting the pair of items that ensure that the space of queries is well covered. Finally, there are more active methods that actively synthesize queries in order learn more efficiently~\cite{sadigh_active_2017, biyik_asking_2020}. Since our focus is on teacher selection rather than query selection, for the purposes of our analysis we will assume that queries are sampled uniformly at random. However, existing methods for query selection can be easily combined with our teacher selection algorithm to further improve reward learning.

To formalize the problem of teacher selection, this section proceeds as follows. We (1)~provide a representation of items and rewards, (2)~apply a well-known model of human rationality to our problem, (3)~offer a method for updating belief distributions that uses preference comparisons from a human teacher, and (4)~propose two metrics that measure the correctness of a belief distribution.

\paragraph{Representing Items and Rewards}

Intuitively, each item can be represented as a set of features. For example, a book could be described by the number of pages and the number of positive reviews or a maneuver made by a self-driving car could be described by its position and distance from other vehicles at each time step. Hence, each item $i$ can formally be represented by a feature vector $\phi_i \in \mathbb{R}^{d}$ where $d$ is the number of features that describe the $i$th item.  

Given this representation of an item, the reward $R(i)$ for an item $i$ can be expressed as a dot product between the feature vector $\phi_i$ and the weight vector $\mathbf{w}\in \mathbb{R}^d$ for the reward model that is being learned: 
\begin{equation}
    R(i) = \mathbf{w}^\top \phi_i. 
\end{equation}
If the items cannot be expressed by a feature vector, this approach can still be used by treating the feature vector $\phi_i$ as a one-hot vector: given the $i$th item, the $i$th entry of the feature vector $\phi_i$ would be $1$ and every other entry would be $0$ while the $i$th entry of the weight vector $\mathbf{w}$ would be the reward $R(i)$ for the $i$th item.

During reward learning, the human teacher is presented with two items and the probability of the human choosing one item over another item depends on the difference in reward between the two items at hand. We therefore express the difference in the reward between two items $i$ and $j$ as the equation 
\begin{equation}
    R(i) - R(j) = \mathbf{w}^\top (\phi_i - \phi_j) = \mathbf{w}^\top\varphi_{ij},
\end{equation}
where $\varphi_{ij} = \phi_i - \phi_j$ is the difference in the feature vectors of the two items.

\paragraph{Modeling Human Rationality}

Human teachers can be represented as Boltzmann-rational agents following a large body of existing work on reward learning~\cite{lee_pebble_2021, jeon_reward-rational_2020, ziebart2010modeling, Christiano2017, bradley_rank_1952, liang_reward_2022, ramachandran_bayesian_2007, palan_learning_2019, freedman2020adapting}. Moreover, we assume that each teacher has a different known rationality parameter $\beta$ rather than assuming $\beta = 1$ for all teachers. Boltzmann-rational teachers are more likely to choose the higher reward item if they are ``more rational" (i.e., a higher $\beta$), or if the difference in reward between the two items is greater. The probability that the teacher chooses an item $i$ over and an item $j$ is given by
\begin{equation} 
    \label{eq:boltzmann-rationality}
    P(i\succ j;\beta) = \frac{\exp(\beta R(i))}{\exp(\beta R(i)) + \exp(\beta R(j))}.
\end{equation}
We thus model the human choice probabilistically: 
\begin{equation}
    \label{eq:P(I|w)}
    P(I|\mathbf{w}; \varphi_{ij}, \beta) = \frac{1}{1 + \exp(-I \beta \mathbf{w}^\top\varphi_{ij})},
\end{equation}
where $I = +1$ if the human prefers item $i$ over item $j$ and $I = -1$ if the human prefers item $j$ over item $i$. This reflects the difference in value of the two items but not their absolute value. Equation \ref{eq:P(I|w)} is a logistic model of the probability of the human preference $I$, where $\beta$ determines the slope. As the difference in reward between the two items increases, the probability that the teacher chooses the higher reward item approaches 1.

\paragraph{Updating Belief Distributions}

The goal of reward learning is to learn the weight vector $\mathbf{w}$ of the reward model. Given the preference of a teacher $I$, the difference in feature vectors $\varphi_{ij}$, and the teacher's rationality parameter $\beta$, the learner updates its belief over the weights of the reward model. That is, the belief over the weights of the reward model is updated such that the reward model now predicts that the item selected by the teacher is more valuable than it was prior to the belief update. Formally, we begin with the current belief distribution $P(\mathbf{w})$, which we treat as the prior distribution, and update it according to Bayes' theorem in the following way:
\begin{equation}
    \label{eq:P(w|I)}
    P(\mathbf{w}|I;\varphi_{ij}, \beta) = \frac{P(I | \mathbf{w}; \varphi_{ij}, \beta) P(\mathbf{w})}{\int P(I|\mathbf{w}'; \varphi_{ij}, \beta) P(\mathbf{w}') d\mathbf{w}'},
\end{equation}
where $P(I|\mathbf{w}; \varphi_{ij}, \beta)$ is given by Equation~\ref{eq:P(I|w)}.

\begin{table*}[t]
    \caption{The general form of an expected metric $\mathcal{M}$ along with the expected metrics for mean squared error (MSE) and log loss (LL).}
    \label{tab:expected-metric-table}

    \small 
    \begin{tabularx}{\linewidth}{Xl} 
        \toprule
        Expected Metric & Equation \\
        \midrule
        $\EE_{\substack{\mathbf{w} \sim P_\mathbf{w} \\ I \sim P_{I|\mathbf{w}}}} \left[ \mathcal{M}(P_{\mathbf{w}|I}, \mathbf{w}; \varphi_{ij}, \beta) \right]$ & $\int P_{\mathbf{w}} \sum_{I} P_{I|\mathbf{w}} \mathcal{M}(P_{\mathbf{w}|I}, \mathbf{w}) d\mathbf{w}$ \\
        \midrule
        $\EE_{\substack{\mathbf{w} \sim P_\mathbf{w} \\ I \sim P_{I|\mathbf{w}}}} \left[ \MSE(P_{\mathbf{w}|I}, \mathbf{w}; \varphi_{ij}, \beta) \right]$ & $2 \sum_{I} \frac{2}{\int f_I(\mathbf{w})d\mathbf{w}} \times \left[ \int f_I(\mathbf{w}) d\mathbf{w} \int f_I(\mathbf{w}) \left\lVert \mathbf{w} \right\rVert^2 d\mathbf{w} - \left\lVert \int f_I(\mathbf{w})\mathbf{w}d\mathbf{w} \right\rVert^2 \right]$ \\
        \midrule 
        $\EE_{\substack{\mathbf{w} \sim P_\mathbf{w} \\ I \sim P_{I|\mathbf{w}}}} \left[ \LL(P_{\mathbf{w}|I}, \mathbf{w}; \varphi_{ij}, \beta) \right]$ & $-\sum_{I} \int f_I(\mathbf{w}) \log \left( \frac{f_I(\mathbf{w})}{\int f_I(\mathbf{w}') d\mathbf{w}'} \right) d\mathbf{w}$ \\
        \bottomrule
    \end{tabularx}
\end{table*}

\paragraph{Measuring Belief Distribution Error}

After querying a teacher and updating the belief over the weights of the reward model $\mathbf{w}$, the belief distribution can be evaluated on a metric that measures the ``correctness'' or the distance of this belief distribution to the true belief distribution. Here, we consider two such metrics: the mean squared error ($\MSE$) and the log loss ($\LL$). The $\MSE$ measure represents how ``far away'' the belief distribution is from the true value while the $\LL$ measure represents the height of the belief distribution at the true value. In both cases, a lower score indicates a more accurate distribution. Using $Q(\mathbf{w})$ as the belief distribution over the weight vector $\mathbf{w}$ and $\mathbf{w}_\emph{\mbox{true}}$ as the true weight vector, the $\MSE$ and $\LL$ measures are given as follows.
\begin{align}
    \label{eq:mean-squared-error}
    \MSE(Q(\mathbf{w}), \mathbf{w}_\emph{\mbox{true}}) &= \int Q(\mathbf{w}) ||\mathbf{w} - \mathbf{w}_\emph{\mbox{true}}||^2 d\mathbf{w} \\
    \label{eq:log-loss}
    \LL(Q(\mathbf{w}), \mathbf{w}_\emph{\mbox{true}}) &= -\log (Q(\mathbf{w}_\emph{\mbox{true}}))
\end{align}
Note that we will describe a greedy approach that selects the teacher that in expectation leads to our belief distribution scoring the best on one of these metrics after a single update in the next section.

Work on active learning from human preferences uses volume removal (i.e., removing as much of the integral of the unnormalized distribution as possible) as a metric~\cite{sadigh_active_2017, biyik_batch_2018, palan_learning_2019}. However, this may not be an appropriate metric for teacher selection. This is because a larger Boltzmann rationality parameter $\beta$ results in a larger volume of the belief distribution being removed but may not necessarily lead to a more accurate belief distribution.

\section{Teacher Selection}

We propose a method for selecting and querying the teacher that produces the best immediate improvement in the expectation of a given metric, which approximates the expected VOI of the teacher feedback. The metrics evaluate how similar the posterior belief is to the ground truth reward, so lower scores indicate improvements in the learned reward model. The algorithm considers uncertainty over two variables: the ground-truth parameterization of the reward model and the item from the query that the teacher prefers. In particular, the expectation of the metric must be taken over the current belief distribution $P(\mathbf{w})$ and the probability $P(I|\mathbf{w};\varphi_{ij},\beta)$ of the teacher preferring each item. Formally, we express the expectation of a given metric $\mathcal{M}$ in Table~\ref{tab:expected-metric-table}. Note that we use the notation $P_\mathbf{w} = P(\mathbf{w})$, $P_{I|\mathbf{w}} = P(I|\mathbf{w};\varphi_{ij}, \beta)$, and $P_{\mathbf{w}|I} = P(\mathbf{w}|I, \varphi_{ij}, \beta)$ throughout this section.

Importantly, the expected value of a given metric only depends on the known variables $\varphi_{ij}$ and $\beta$ along with the current belief distribution $P_\mathbf{w}$ given a straightforward substitution of Equations~\ref{eq:P(I|w)} and~\ref{eq:P(w|I)}. This enables our method to calculate the expected value of the metric for a given teacher with the rationality parameter $\beta$. This will be used to find the teacher to query at each time step: the teacher with the lowest metric in expectation should be selected as that would result in a weight vector that is closest to the true 
weight vector in expectation.

Finally, given the general form of an expected metric, Table~\ref{tab:expected-metric-table} defines the expectations of the MSE and LL metrics using the function $f_I(\mathbf{w}) = P_\mathbf{w} /( 1 + \exp(-I \beta \mathbf{w}^\top \varphi_{ij}))$.

\paragraph{Selecting a Teacher}

To select the teacher to query, we first calculate the expected metric for each teacher $\beta$ given the current belief distribution $P(\mathbf{w})$ and then select the teacher that would result in the lowest expected metric score. Formally, the rationality parameter $\beta^*$ that leads to the largest reduction in the expectation of the metric is defined as follows:
\begin{equation}
    \label{eq:teacher-selection}
     \beta^\ast = \argmin_{\beta\in\boldsymbol{\beta}} \left[ \EE_{\substack{\mathbf{w} \sim P_\mathbf{w} \\ I \sim P_{I|\mathbf{w}}}} \left[ \mathcal{M}(P_{\mathbf{w}|I}, \mathbf{w}; \varphi_{ij}, \beta) \right] \right],
\end{equation}
where $\boldsymbol{\beta}$ is a vector of the $\beta$ values of the teachers.

\paragraph{Learning a Reward Model}

To learn the reward model, the learner begins with an initial belief distribution $P_\mathbf{w}$ over the reward function parameterization and then updates it according to Algorithm \ref{alg:learn-reward-model-algorithm}. First, the algorithm generates queries of paired items and calculates $\beta^\ast$, which is the rationality parameter that leads to the largest improvement in the expectation over the correctness metric. The algorithm queries the teacher with this rationality parameter, and the teacher responds with a \textit{preference} indicating which of the two items in the query they prefer. This preference is used to update the belief distribution $P_\mathbf{w}$. The algorithm iterates until convergence, which is when the entropy of the distribution $P_\mathbf{w}$ becomes lower than a specified threshold $\epsilon$.

\begin{algorithm}[t]
    \small

    \caption{\textsc{LearnRewardModel}$(\cdot)$}
    \label{alg:learn-reward-model-algorithm}

    \DontPrintSemicolon
    \SetKw{Return}{return}
    \SetKw{Not}{not}
    \SetKwInput{Input}{Input}
    \SetKwInput{Output}{Output}

    \Input{An initial belief distribution $P(\mathbf{w})$, a list of the teachers' Boltzmann rationality parameters $\boldsymbol{\beta}$, an expected metric function $\EE[\mathcal{M}]$, and an entropy convergence threshold $\epsilon$}
    \Output{A posterior belief distribution $P(\mathbf{w})$}
    \BlankLine

    $\emph{converged} \gets \emph{False}$ \\
    \While{\Not $\mbox{converged}$} {
        $\phi_i,\ \phi_j \gets \textsc{GenerateQuery}()$ \\
        $\varphi_{ij} \gets \phi_i - \phi_j$ \\
        $\beta^\ast \gets \argmin_{\beta\in\boldsymbol{\beta}} \EE \left[ \mathcal{M}(P(\mathbf{w}),\mathbf{w}; \varphi_{ij}, \beta) \right]$ \\
        \BlankLine
        $I \gets \textsc{Teacher}(\beta^*).\textsc{Query}(\phi_i, \phi_j)$ \\
        $P(\mathbf{w}) \gets \textsc{Normalize}(P(\mathbf{w}) \cdot P(I|\mathbf{w},\varphi_{ij}, \beta^*))$ \\
        \BlankLine
        $\emph{entropy} \gets - \int P(\mathbf{w})\log P(\mathbf{w})d\mathbf{w}$\\
        $\emph{converged} \gets \emph{entropy} < \epsilon$
    }
    \BlankLine
    \Return $P(\mathbf{w})$ \\
\end{algorithm}

\section{Theoretical Analysis}

In this section, we first prove that the belief distribution will converge to the true distribution and then show that, under certain conditions, querying a less rational teacher can result in more informative feedback.

\paragraph{Convergence}

Algorithm~\ref{alg:learn-reward-model-algorithm} queries multiple teachers with different $\beta$ values until the reward estimate convergences. Here, we show that this process will make the belief distribution over $\mathbf{w}$ converge to the true value. 
\begin{theorem}
    \label{thm:convergence}
    In the limit of $N\to\infty$ random queries to Boltzmann-rational teachers with positive, finite $\beta$ values, the posterior distribution over $\mathbf{w}$ converges to the true value.
\end{theorem}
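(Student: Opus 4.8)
The plan is to show that the posterior over $\mathbf{w}$ concentrates at $\mathbf{w}_{\text{true}}$ as $N \to \infty$. This is fundamentally a Bayesian consistency argument, so I would frame it around the behavior of the log-likelihood ratio between any candidate $\mathbf{w}$ and the truth, accumulated over the $N$ i.i.d.\ preference observations.

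\textbf{Setup and key quantity.} First I would fix an arbitrary $\mathbf{w} \neq \mathbf{w}_{\text{true}}$ and consider the log-posterior-ratio after $N$ queries. Since queries are drawn independently (items sampled uniformly, teacher $\beta$'s bounded), Bayes' theorem (Equation~\ref{eq:P(w|I)}) gives
\begin{equation}
    \log \frac{P_N(\mathbf{w})}{P_N(\mathbf{w}_{\text{true}})} = \log\frac{P_0(\mathbf{w})}{P_0(\mathbf{w}_{\text{true}})} + \sum_{k=1}^{N} \log \frac{P(I_k \mid \mathbf{w}; \varphi_k, \beta_k)}{P(I_k \mid \mathbf{w}_{\text{true}}; \varphi_k, \beta_k)}.
\end{equation}
The plan is to show the summation tends to $-\infty$ almost surely, which forces the posterior mass at any such $\mathbf{w}$ to vanish relative to the truth. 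I would take the expectation of each summand under the true generating distribution $P(I_k \mid \mathbf{w}_{\text{true}})$ and recognize it as the negative Kullback--Leibler divergence between the two Bernoulli preference distributions induced by $\mathbf{w}_{\text{true}}$ and $\mathbf{w}$ on query $\varphi_k$.

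\textbf{Main argument.} By nonnegativity of KL divergence, each term has nonpositive expectation, and it is strictly negative precisely when the two logistic choice probabilities differ, i.e.\ when $\mathbf{w}^\top\varphi_k \neq \mathbf{w}_{\text{true}}^\top\varphi_k$ (using that $\beta_k > 0$ is finite, so the logistic map in Equation~\ref{eq:P(I|w)} is strictly monotonic and non-degenerate). The crux is that since queries are sampled uniformly at random from a set spanning $\mathbb{R}^d$, with positive probability a drawn $\varphi_k$ satisfies $\varphi_k^\top(\mathbf{w} - \mathbf{w}_{\text{true}}) \neq 0$; hence the per-query expected log-ratio, averaged over the query distribution, is a strictly negative constant $-c(\mathbf{w}) < 0$. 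Applying the strong law of large numbers to the i.i.d.\ summands, the average converges to $-c(\mathbf{w})$, so the sum diverges to $-\infty$ almost surely, giving $P_N(\mathbf{w})/P_N(\mathbf{w}_{\text{true}}) \to 0$ pointwise.

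\textbf{Expected main obstacle.} The genuinely hard part is upgrading this pointwise collapse of the likelihood ratio into convergence of the full posterior \emph{distribution} to a point mass at $\mathbf{w}_{\text{true}}$, since $\mathbf{w}$ lives in a continuum and pointwise statements do not automatically control the integrated posterior mass outside a neighborhood of the truth. I would handle this by establishing uniformity: the KL-based exponent $c(\mathbf{w})$ is bounded below on any set $\{\mathbf{w} : \|\mathbf{w} - \mathbf{w}_{\text{true}}\| \geq \delta\}$, which (assuming a bounded parameter domain or sufficient prior tail decay, plus continuity/compactness) lets me cover that region and bound its total posterior mass by a quantity shrinking to zero. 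A further subtlety worth flagging is identifiability: if the feature-difference vectors $\varphi_{ij}$ do not span $\mathbb{R}^d$, then only the projection of $\mathbf{w}$ onto their span is identified, so the convergence claim implicitly requires that the sampled queries span the relevant subspace---I would state this spanning condition explicitly as the hypothesis that makes $c(\mathbf{w}) > 0$ for every $\mathbf{w} \neq \mathbf{w}_{\text{true}}$.
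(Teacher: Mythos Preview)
Your approach is essentially identical to the paper's: both compute the log-posterior-ratio between an arbitrary $\mathbf{w}$ and $\mathbf{w}_{\text{true}}$, decompose it as a sum of log-likelihood ratios plus a prior term, identify the expectation of each summand as the negative KL divergence between the Bernoulli choice distributions, and conclude that the sum diverges so the Bayes factor vanishes for every $\mathbf{w}\neq\mathbf{w}_{\text{true}}$. Your write-up is in fact more careful than the paper's---the paper stops at the pointwise statement and does not invoke the SLLN explicitly, address the uniformity needed to pass from pointwise density collapse to posterior concentration on a continuum, or state the spanning/identifiability condition on the $\varphi_{ij}$---so the obstacles you flag are genuine refinements rather than gaps relative to the published argument.
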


\begin{proof}
    The likelihood of a sequence of human choices $\underline{I}\in[\pm1]^N$ from humans with rationality parameters $\underline{\beta}$ is $P(\underline{I}|\mathbf{w}; \underline{\beta}) = \prod_{i=1}^N P(\underline{I}_i|\mathbf{w}; \underline{\beta}_i)$. The posterior distribution over $\mathbf{w}$ after a sequence of queries is 
    \begin{equation*}
        P(\mathbf{w}|\underline{I}; \underline{\beta}) \propto \prod_i^N P(\underline{I}_i|\mathbf{w}; \underline{\beta}_i) P(\mathbf{w}).
    \end{equation*}
    
    We will show that $P(\mathbf{w}|\underline{I}; \underline{\beta}) \to 0$ as $N\to\infty$ for all $\mathbf{w}\neq \mathbf{w}_\emph{\mbox{true}}$. The Bayes factor between $\mathbf{w}$ and $\mathbf{w}_\emph{\mbox{true}}$ is
    \begin{align*}
        \BF =\frac{P(\mathbf{w}|\underline{I}; \underline{\beta})}{P(\mathbf{w}_\emph{\mbox{true}}|\underline{I}; \underline{\beta})}
         = \frac{\prod_i^N P(\underline{I}_i|\mathbf{w}; \underline{\beta}_i) P(\mathbf{w})}{\prod_i^N P(\underline{I}_i|\mathbf{w}_\emph{\mbox{true}}; \underline{\beta}_i) P(\mathbf{w}_\emph{\mbox{true}})},
    \end{align*}
    where $P(\mathbf{w}_\emph{\mbox{true}}|\underline{I}; \underline{\beta})$ is the posterior distribution at $\mathbf{w}_\emph{\mbox{true}}$. We can show that $\BF\to0$ as $N\to\infty$ except when $\mathbf{w}=\mathbf{w}_\emph{\mbox{true}}$. This implies $P(\mathbf{w}|\underline{I}; \underline{\beta}) \to 0$ except when $\mathbf{w}=\mathbf{w}_\emph{\mbox{true}}$. We require $P(\mathbf{w}_\emph{\mbox{true}})\neq 0$ as $\BF$ is undefined otherwise. Trivially, $\BF=1$ when $\mathbf{w}=\mathbf{w}_\emph{\mbox{true}}$. 

    We now consider $\mathbf{w}\neq\mathbf{w}_\emph{\mbox{true}}$. We can define the negative logarithm of BF, which approaches $\infty$ as $\BF \to 0$:
    \begin{align*}
        -&\log \left(\BF\right) \\ 
        &= -\log\left( \frac{\prod_i^N P(\underline{I}_i|\mathbf{w}; \underline{\beta}_i) P(\mathbf{w})}{\prod_i^N P(\underline{I}_i|\mathbf{w}_\emph{\mbox{true}}; \underline{\beta}_i) P(\mathbf{w}_\emph{\mbox{true}})}\right) \\ 
        &= -\sum_i^N \log \left( \frac{P(\underline{I}_i|\mathbf{w}; \underline{\beta}_i)}{P(\underline{I}_i|\mathbf{w}_\emph{\mbox{true}}; \underline{\beta}_i)}\right) -\log \left(  \frac{P(\mathbf{w})}{ P(\mathbf{w}_\emph{\mbox{true}})}\right).
    \end{align*}

    The first term is the sum of many terms. If this term approaches $\infty$ as $N\to \infty$ then $\BF\to0$. We now examine each term in the sum and show that in expectation they are each positive. All of these terms are independent as they are only depend on the likelihood and not on the current distribution. Hence, they will not decay with additional steps, and so the sum will diverge if the individual terms are positive in expectation. The expected value for each term in the sum is 
    \begin{align*}
        \mathbb{E}&\left[-\log \left( \frac{P(\underline{I}_i|\mathbf{w}; \underline{\beta}_i)}{P(\underline{I}_i|\mathbf{w}_\emph{\mbox{true}}; \underline{\beta}_i)}\right)\right] \\
         &= -\sum_{\mathclap{\underline{I}_i\in+1,-1}} P(\underline{I}_i|\mathbf{w}_\emph{\mbox{true}}; \underline{\beta}_i)\log \left( \frac{P(\underline{I}_i|\mathbf{w}; \underline{\beta}_i)}{P(\underline{I}_i|\mathbf{w}_\emph{\mbox{true}}; \underline{\beta}_i)}\right).
    \end{align*}
    This is the KL divergence between $P(\underline{I}_i|\mathbf{w}_\emph{\mbox{true}}; \underline{\beta}_i)$ and $P(\underline{I}_i|\mathbf{w}; \underline{\beta}_i)$. This is strictly non-negative and only equal to zero when $P(\underline{I}_i|\mathbf{w}; \underline{\beta}_i) = P(\underline{I}_i|\mathbf{w}_\emph{\mbox{true}}; \underline{\beta}_i)$. When $\beta=0$, each of these terms equals $0$. As $\beta \to \infty$, $P(\underline{I}_i|\mathbf{w}; \underline{\beta}_i) \to H(I\mathbf{w}^\top \varphi)$, where $H(\cdot)$ is the Heaviside step function. In this case, it holds that $P(\underline{I}_i|\mathbf{w}; \underline{\beta}_i) = P(\underline{I}_i|\mathbf{w}_\emph{\mbox{true}}; \underline{\beta}_i)$ whenever the values $\mathbf{w}^\top \varphi$ and $\mathbf{w}_\emph{\mbox{true}}^\top \varphi$ have the same sign.

    Therefore, for positive, finite $\beta$ each of the terms in the sum is positive, so the sum diverges, and so the $P(\mathbf{w}|\underline{I}; \underline{\beta})\to0$ for all $\mathbf{w}\neq\mathbf{w}_\emph{\mbox{true}}$. 
\end{proof}

\paragraph{Bigger $\beta$ isn't always more informative}

Querying a more rational teacher (with a larger $\beta$ value) does not always lead to faster convergence to the true value, as measured by lower MSE or LL, because the magnitude of $\mathbf{w}^\top\varphi_{ij}$ can be learned from the teacher making mistakes. 

We empirically observe this in Figure~\ref{fig:optimal-beta-results}, where we demonstrate that if our current belief distribution $P(\mathbf{w})$ is a normal distribution characterized by $\mu$ and $\sigma$, a lower $\beta$ value is more informative for certain values of $\mu$ and $\sigma$. Specifically, when the distribution is symmetric ($\mu = 0$) then a larger value of $\beta$ is better, and as the distribution gets broader (larger $\sigma$) larger $\beta$ is also better. If the distribution is very wide then a large $\beta$ allows us to quickly remove a lot of probability mass, while if the distribution is narrow (and asymmetric) then we learn about the value of $\mathbf{w}^\top\varphi_{ij}$ from the humans making mistakes, which requires the human to be less than perfectly rational. For example, if $\mathbf{w}^\top\varphi_{ij}>0$ then a perfectly rational human would always choose item $i$ over item $j$, and we would not learn about the actual \emph{value} of $\mathbf{w}^\top\varphi_{ij}$.

\begin{figure}[t]
    \centering
    \includegraphics[width=\columnwidth]{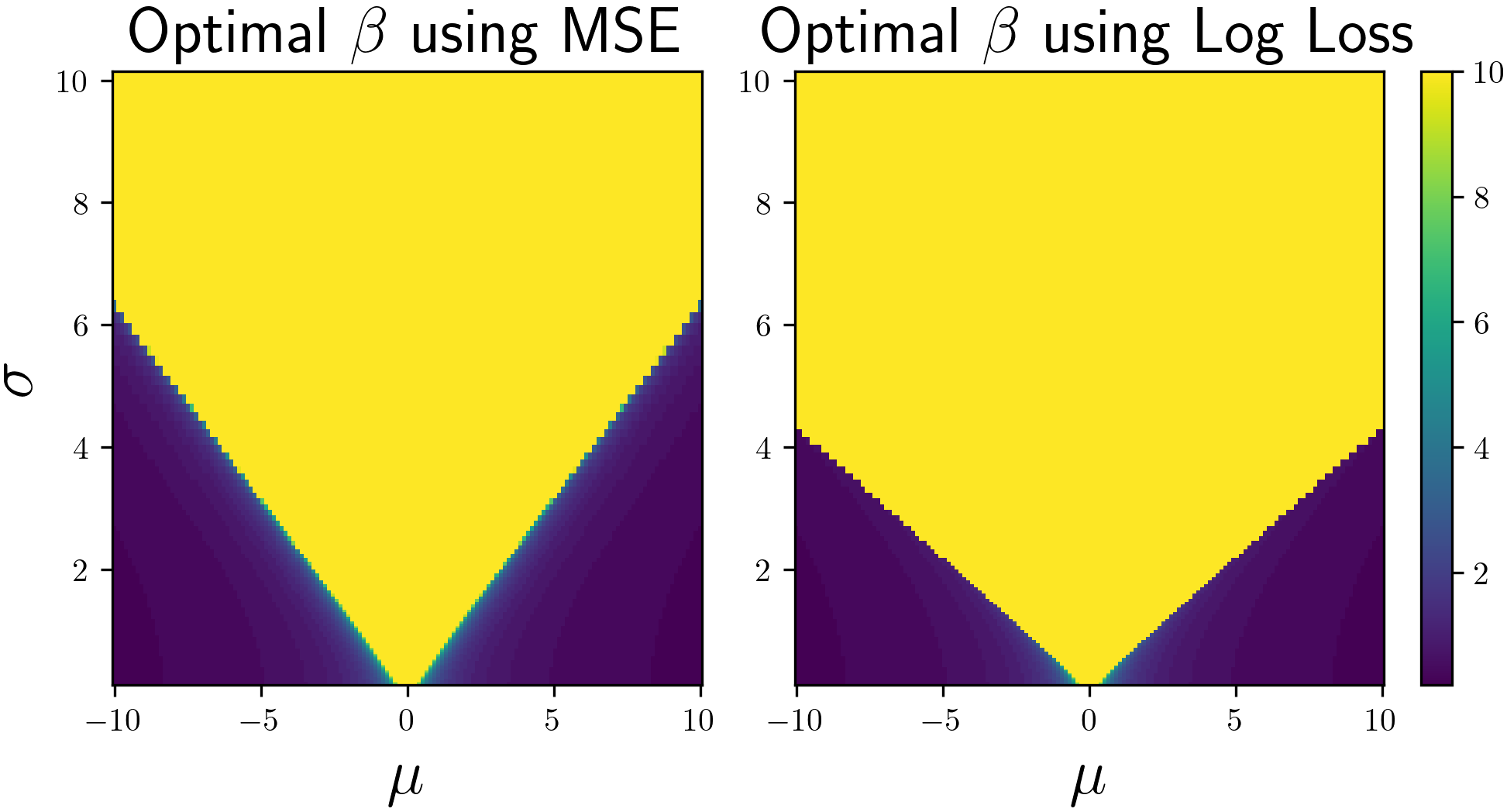}
    \caption{For some prior beliefs over $\mathbf{w}$, querying a teacher with a lower $\beta$ parameter is more informative. The plots show the most informative $\beta$ value (according to the mean squared error and log loss metrics, respectively) for a range of beliefs. Each belief is a Gaussian, parameterized by $\mu$ (horizontal axis) and $\sigma$ (vertical axis). The purple regions of the plots indicate beliefs where it is most informative to query a teacher with a $\beta$ of approximately $1$.}
    \label{fig:optimal-beta-results}
\end{figure}

\section{Restaurant Recommendation}

We now discuss how our method for reward learning using feedback from multiple teachers can be applied to a simplified restaurant recommendation domain. In this domain, the goal is to learn a reward function that can be used to recommend restaurants to a user. This reward model must be learned from feedback from multiple teachers, in this case by asking which of two restaurants a human prefers. It is important to highlight that our approach is compatible with a variety of popular recommendation tasks, including entertainment~\cite{gomez2015netflix,perano2021professional}, news~\cite{raza2021news}, and shopping~\cite{alamdari2020systematic} recommendations.

More formally, the problem of restaurant recommendation has a set of restaurants $\rho = \{\rho_1, \rho_2, \ldots, \rho_n\}$ that can be recommended to a user. Moreover, there is a set of users $U=\{U_1,U_2,\ldots,U_m\}$ who can be queried about their restaurant preferences. Each restaurant is expressed as a set of features $F = \{ \mathrm{Cleanliness}, \mathrm{Vegan}, \mathrm{Spiciness} \}$ where $\mathrm{Cleanliness} \in [1, 10]$ describes the cleanliness of the restaurant, $\mathrm{Vegan} \in \{ 0, 1 \}$ describes whether the restaurant is vegan-friendly, and $\mathrm{Spiciness} \in [1, 10]$ describes the spiciness of the food. The preference rating for each restaurant is denoted by $\mathbf{w}^\top \rho_i$, where $\mathbf{w}\in \mathbb{R}^3$ is a weight vector that parameterizes the reward model. The aim is to learn the weights $\mathbf{w}$ using feedback from multiple users to provide useful restaurant recommendations.

We can represent the restaurant recommendation domain using our approach. The set of items ${\phi_1, \phi_2, \ldots, \phi_n}$ is the set of restaurants $\rho$. The set of human users $U$ is the set of human teachers. The users are modelled as Boltzmann-rational, and have known rationality parameters $\beta_1, \beta_2,\ldots,\beta_m$. Beginning with an initial distribution $P(\mathbf{w})$, we will use Algorithm~\ref{alg:learn-reward-model-algorithm} to converge to the weight values for the reward function that represents the user preferences. First, we select a pair of restaurants for a user to compare (in this case randomly selected) and apply Equation~\ref{eq:teacher-selection} describing which user should be queried in order to achieve the lowest metric score in expectation after a single update. Next, this user is selected and then asked which of the two restaurants they prefer. Finally, using the selected user's preference, the reward model weights are updated according to Equation~\ref{eq:P(w|I)} to generate a new belief distribution. The process is repeated until the belief distribution converges.

\section{Experiments} 

We now show that our approach method for selecting $\beta$ outperforms several baseline methods, using the simple restaurant recommendation domain. In Figure~\ref{fig:metric-convergence-results}, we compare: (1) selecting the largest $\beta$ value to see if the result that larger $\beta$ is not always better is true in practice; (2) selecting $\beta$ randomly to ensure that the advantage over selecting the largest $\beta$ is not just due to the randomness of the selection; and (3) always selecting $\beta = 1$ because this is often what is assumed to be the rationality parameter in other work.

In this experiment, the size of the weight vector is $d = 3$ and the domain of the weights is $W = [-10, 10]^3$, which is discretized. The prior distribution of the weights is a uniform distribution over this domain $P(\mathbf{w})=\mathcal{U}(W)$ and the true weight $\mathbf{w}_\emph{\mbox{true}}\in W$ is sampled from this prior. There are 21 teachers, with $\beta$ values uniformly spaced between 0 and 4. For 100 steps, two restaurant feature vectors $\phi = \{ \mathrm{Cleanliness}, \mathrm{Vegan}, \mathrm{Spiciness} \}$ are generated randomly, where $\mathrm{Cleanliness}, \mathrm{Spiciness}\sim \mathcal{U}(1,10)$, and $\mathrm{Vegan}$ are uniformly drawn from $\{0, 1\}$. While we generate our samples randomly in order to isolate the the effect of teacher selection, any of the active query selection methods from previous work could be used here. The teacher is selected and then queried using one of the various methods and the belief distribution is updated based on the preference of that teacher. The same $\phi$ vectors are used for each method, so that the only difference between the methods is the selection of $\beta$. This procedure is repeated 100 times, each time sampling a new true weight vector $\mathbf{w}_\emph{\mbox{true}}$. 

\begin{figure}[t]
    \centering
    \includegraphics[width=\columnwidth]{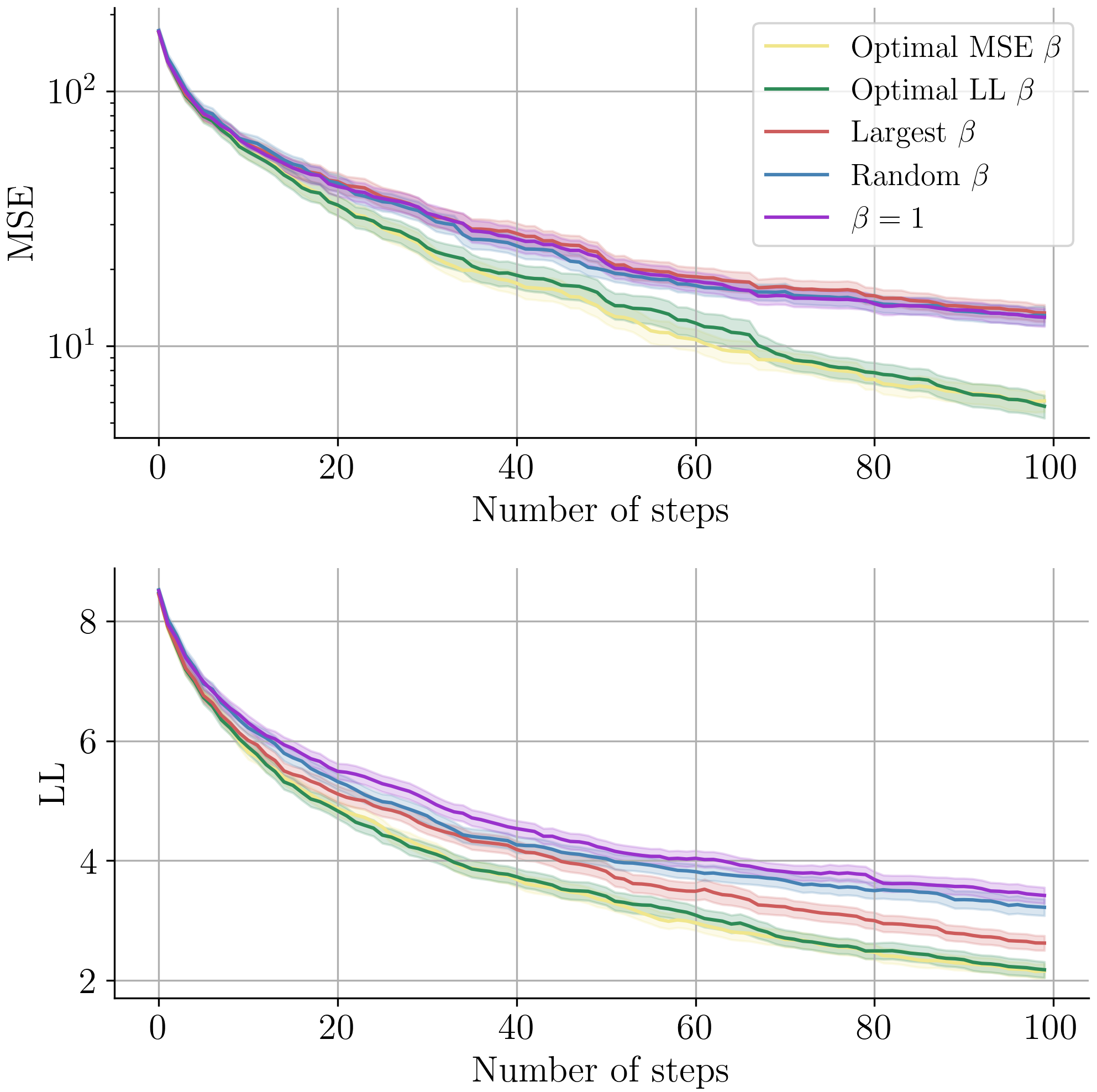}
    \caption{Active teacher selection improves reward inference. These plots show the expected mean squared error and expected log loss over the course of 100 iterations of reward inference using various teacher selection methods. The solid line is the mean, and the shading is the standard deviation. Selecting teacher $\beta$ w.r.t. mean square error most effectively minimizes mean square error, while selecting $\beta$ w.r.t. log loss most effectively minimizes log loss. In both cases, selecting teachers according to Equation~\ref{eq:teacher-selection} clearly outperforms the heuristic of always selecting the most rational teacher (largest $\beta$) and the baselines (random $\beta$ and $\beta=1$).}
    \label{fig:metric-convergence-results}
\end{figure}

Overall, we observe that the active teacher selection methods (MSE and LL) outperform the baseline methods. Moreover, we examine how the most informative value of $\beta$ changes with additional queries in Figure~\ref{fig:beta-convergence-results}. As expected, the optimal $\beta$ value decreases with additional queries, as the distribution gets less broad. At beginning of training, our approach queries the teachers with large $\beta$ values because this enables it to determine the sign of $\mathbf{w}^\top\varphi_{ij}$, and then our approach queries the teachers with smaller $\beta$ values to determine the magnitude of $\mathbf{w}^\top\varphi_{ij}$ as it gets more information.

\begin{figure}[t]
    \centering
    \includegraphics[width=\columnwidth]{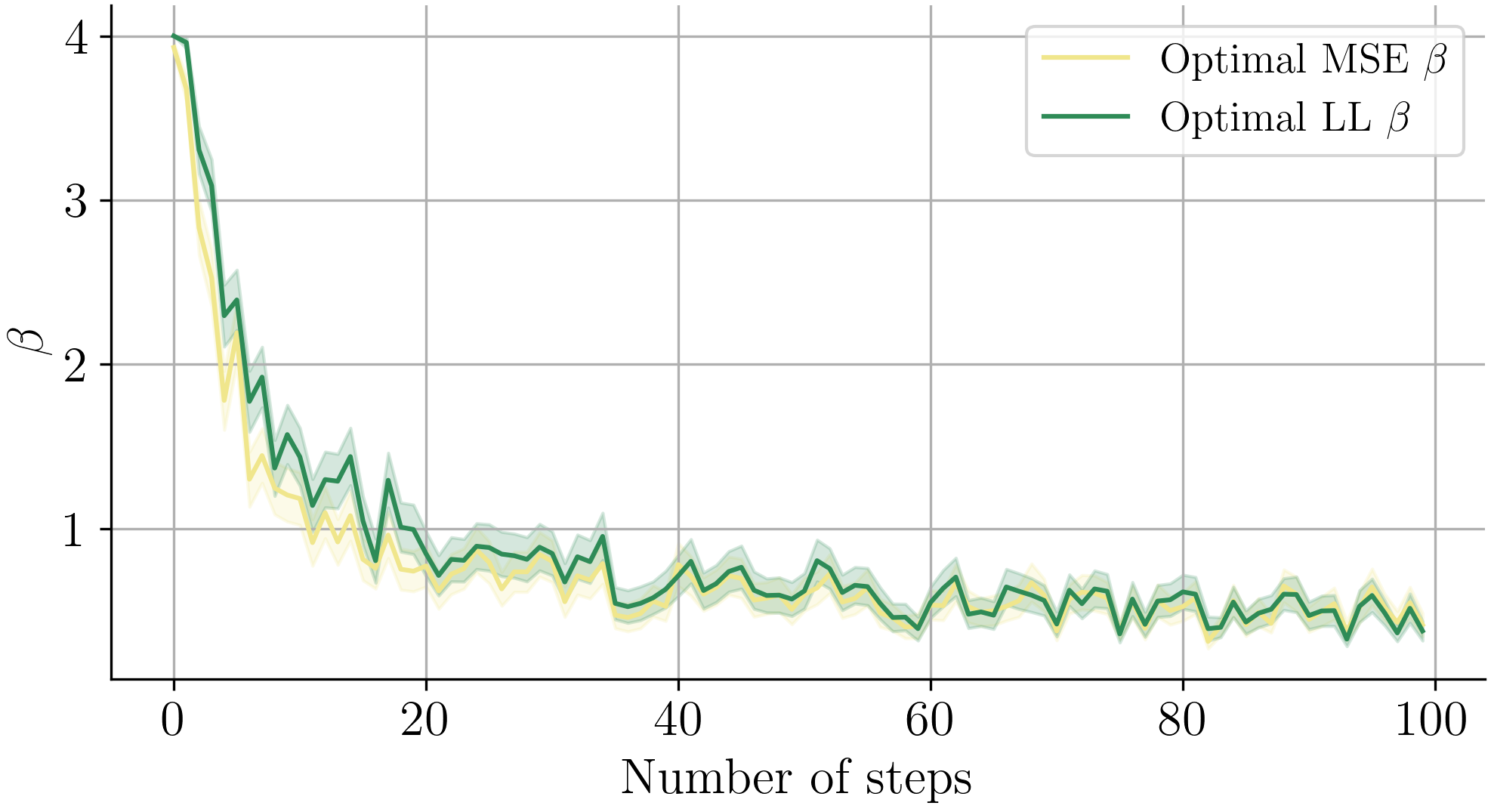}
    \caption{This plot shows the most informative values of $\beta$ during training, averaged across 100 runs (given the expected mean squared error and expected log loss respectively). The solid line is the mean and the shaded area is the standard deviation. $\beta$ decreases over the course of training, as the learner's belief distribution over $\mathbf{w}$ becomes more confident.}
    \label{fig:beta-convergence-results}
\end{figure}

\section{Limitations and Future Work}

For the sake of conceptual clarity and mathematical formalism, we have used relatively simple human decision-making and reward models. Future work should extend these results by increasing model complexity.

For example, this analysis assumes that humans are Boltzmann-rational decision-makers with constant, known $\beta$ values. While more nuanced than optimal models, Boltzmann-rational models fail to account for systematic biases in human judgement~\cite{evans_learning_2016,shah_feasibility_2019,chan_human_2021}. This work could be improved by using more complex, realistic models of human decision-making, for example by allowing each human's $\beta$ parameter to vary across the state space to capture teacher specialization or by measuring and explicitly modeling systematic cognitive biases. Moreover, this analysis assumes that the teacher $\beta$ parameters are given, whereas in reality the agent may not have access to this information. Future work should also examine ways of modeling this part of human decision-making alongside learning the reward function.

Finally, future work could extend these results to nonlinear reward models, such as ensembles of neural networks. Moreover, it could explore convergence properties and optimal querying strategies for learning from teachers with different reward functions. For example, variations in individual taste might lead teachers to disagree on which restaurants are best. Future work should explore the ramifications of such inter-teacher variance on teacher selection and reward learning.

\section{Conclusion}

In this work, we motivated, specified, and evaluated an algorithm for selecting which teacher to query during active reward learning with multiple teachers. Our algorithm models the teachers as Boltzmann-rational with known $\beta$ parameters. At each time step, it queries the teacher that will be most informative in expectation. Interestingly, we find that the most informative teacher is not always the most rational one. We prove and demonstrate that the reward learner's belief will eventually collapse to the true reward function under our algorithm. Our hope is that this method and analysis will improve reward learning in domains where feedback is gathered from multiple teachers with varying levels of rationality.

\section*{Acknowledgments}

We thank the anonymous reviewers for their valuable comments. This work was supported in part by a gift from the Open Philanthropy Foundation.

\bibliography{references}

\end{document}